\newtheorem{theorem}{Theorem}
\newtheorem{definition}{Definition}
\newtheorem{repeatprop@}{Proposition}
\newcommand{\Lof}{\ensuremath{\mathcal{O}}}
\newcommand{\id}[1]{\llbracket{#1}\rrbracket}
\newcommand{\R}{\ensuremath{\mathbb{R}}}
\newcommand{\C}{\ensuremath{\mathcal{C}}}
\newcommand{\x}{\ensuremath{\mathbf{x}}}
\newcommand{\X}{\ensuremath{\mathbf{X}}}
\newcommand{\M}{\ensuremath{\mathbf{M}}}
\newcommand{\m}{\ensuremath{\mathbf{m}}}
\newcommand{\Y}{\ensuremath{\mathbf{Y}}}
\newcommand{\z}{\ensuremath{\mathbf{z}}}
\newcommand{\Z}{\ensuremath{\mathbf{Z}}}
\newcommand{\valX}{\ensuremath{\mathcal{X}}}
\newcommand{\EXP}{\ensuremath{\mathbb{E}}}
\newcommand{\SDP}[2]{\ensuremath{\text{SDP}_{\C,#1}(#2)}\xspace}
\newcommand{\EP}{\ensuremath{\text{EP}}\xspace}
\newcommand{\F}{\ensuremath{\text{EP}_{f}}\xspace}
\newcommand{\EL}{\ensuremath{\text{EP}_{\mathcal{O}}}\xspace}
\newcommand{\suffmetric}{{\ensuremath\mathcal{F}}\xspace}
\DeclareMathOperator*{\argmax}{argmax}
\newcommand{\citet}[1]{\citeauthor{#1}~[\citeyear{#1}]}
\title{Probabilistic Sufficient Explanations
}
\author{
Eric Wang
\and
Pasha Khosravi\And
Guy Van den Broeck\\
\affiliations
Department of Computer Science\\
University of California, Los Angeles\\
\emails
ericzxwang@ucla.edu, \{pashak,guyvdb\}@cs.ucla.edu
}
\begin{document}

\maketitle

\begin{abstract}
Understanding the behavior of learned classifiers is an important task, and various black-box explanations, logical reasoning approaches, and model-specific methods have been proposed.
In this paper, we introduce \emph{probabilistic sufficient explanations}, which formulate explaining an instance of classification as choosing the ``simplest'' subset of features such that only observing those features is ``sufficient'' to explain the classification. That is, sufficient to give us strong probabilistic guarantees that the model will behave similarly when all features are observed under the data distribution.
In addition, we leverage tractable probabilistic reasoning tools such as probabilistic circuits and expected predictions to design a scalable algorithm for finding the desired explanations while keeping the guarantees intact.
Our experiments demonstrate the effectiveness of our algorithm in finding sufficient explanations, and showcase its advantages compared to Anchors and logical explanations.

\end{abstract}

\section{Introduction}

Machine learning models are becoming ubiquitous, and are being used in critical and sensitive areas such as medicine, loan applications, and risk assessment in courts. Hence, unexpected and faulty behaviors in machine learning models can have significant negative impact on people. As a result, there is much focus on explaining and understanding the behavior of such models. Explainable AI (or XAI) is an active area of research that aims to tackle these issues.

There have been many approaches toward explaining an instance of a classification (called a local explanation) from different perspectives, including logic-based~\cite{symbolicExplain:IJCAI18,Ignatiev_abduction_2019,DarwicheHirth20a} or model-agnostic approaches~\cite{lime:kdd16,shap_NIPS2017,anchors:aaai18}. Each of these methods have their pros and cons; some focus on scalability and flexibility, and some focus on providing guarantees.

In this work, we strive to \emph{probabilistically} explain an instance of classification. Explanations are partial examples, where we treat the features not in the explanation as missing values. We aim for our explanations to be as simple as possible while providing the following sufficiency guarantee: given only the features in the explanation, with high probability under the data distribution $\Pr(\X)$, the classifier makes the same prediction as on the full example. Simplicity and sufficiency are often at odds with each other, hence balancing them is a challenging act. 

We briefly overview other approaches to local explanations and discuss their pros and cons in the framework of sufficiency and simplicity. Broadly, model-agnostic methods are more scalable and flexible but tend to be not as reliable as logic-based methods in providing sufficiency. On the other hand, logical explanation methods tend to sacrifice simplicity for deterministic guarantees of sufficiency. We then motivate how probabilistic notions of sufficiency can overcome these shortcomings as the foundation for sufficient explanations.

Next, we introduce the probabilistic reasoning tools needed to quantify sufficiency and define sufficient explanations: the \textit{Same-Decision Probability} (SDP) and \textit{Expected Prediction} (EP). We use probabilistic circuits (PCs) \cite{choi2020pc} to model the probability distribution $\Pr(\X)$ over the features due to their expressivity and tractability in answering complex probabilistic queries. We also explore connections between SDP and expected prediction and show why expected prediction is better suited for providing the probabilistic guarantees for sufficient explanations.

We then formalize the desired properties of sufficient explanations and motivate our choices for sufficiency and simplicity. Sufficiency leads to maximizing the expected prediction, while simplicity leads to constraining the size of explanations as well as choosing the subset that also maximizes the marginal probability. We capture all these with an optimization problem for finding sufficient explanations. Then, we design a scalable algorithm for finding the most likely sufficient explanation by leveraging tractability of expected prediction. 
    
We provide experiments showcasing the empirical advantage of sufficient explanations and the effectiveness of our search algorithm. Our advantages include: (i) compared to Anchors we get better and more accurate sufficiency guarantees and (ii) our method finds simpler explanations and is more scalable than logical methods.
Lastly, we show the tradeoffs between sufficiency and simplicity, and show that slightly reducing the guarantee can lead to simpler explanations with higher likelihood.

\section{Background and Related Work}
\label{sec:background}
\paragraph{Notation}
We use uppercase letters ($X$) for features (random variables) and lowercase letters ($x$) for their value assignments. 
Analogously, we denote sets of features in bold uppercase ($\X$) and their assignments in bold lowercase ($\x$).
We denote the set of all possible assignments to $\X$ as $\valX$. 
Concatenation $\X\Y$ denotes the union of disjoint sets. We focus on discrete features unless otherwise noted.

We represent a probabilistic predictor as $f:\valX\to [0,1]$ and its decision function as $\C:\valX\to\{0,1\}$,
with $T$ denoting the decision threshold. 
Hence, $\C(\x) = \id{f(\x)\geq T}$.
Sometimes we want to directly deal with log-odds instead of probabilities, in which case we use the log-odds predictor $\Lof:\valX\to\R$ which is defined as $\Lof(\x)=\log\frac{f(\x)}{1-f(\x)}$.

\subsection*{Related Work}
Computing explanations of classifiers has been studied from many different perspectives, including logical reasoning, black-box methods, and model-specific approaches. Some try to explain the learned model globally, making it more interpretable~\cite{survey_blackbox_explain,LiangAAAI19},
while others focus more locally on explaining its prediction for a single instance. 
Next, we go over some local explanation methods, discuss their pros and cons, and motivate how our framework might solve those issues.

\subsubsection*{Model Agnostic Approaches} These methods treat the classifier as a black box. Given an input instance to explain, they perturb the instance in many different ways and evaluate the model on those perturbed instances. Then, they use the results from the perturbations to generate an explanation. Two popular methods under this umbrella are Lime \cite{lime:kdd16} and SHAP \cite{shap_NIPS2017}. The main difference between these methods is the heuristics used to obtain perturbed instances and how to analyze the predictions on these local perturbations. Most provide feature attributions, which are real-valued numbers assigned to each feature, to indicate their importance to the decision and in what direction.

A benefit of these methods is that they can be used to explain any model and are generally more flexible and scalable than their alternatives. On the other hand, the downsides are that they can be very sensitive to the choice of local perturbations and might produce over-confident results \cite{ignatiev2019validating} or be fooled by adversarial methods \cite{advlime:aies20,dimanov2020you}. One of the main reasons for these downsides is that the distribution of the local perturbations tends to be different from the data distribution the classifier was originally trained on. Hence, these approaches do not benefit from the intended generalization guarantees of machine learning models. Moreover, some of the perturbations might be low probability or even impossible inputs, and we might not care as much about their classification outcome.

Additionally, feature attribution methods treat each feature independently and cannot easily capture interactions between bigger subsets of features such as when two features cancel each other's effects. We refer to \citet{camburu2020struggles} for more such examples and discussion on pros/cons of attribution based explanations.

\subsubsection*{Logical Reasoning Approaches} These methods provide explanations with some principled guarantees by leveraging logical reasoning tools. Some approaches use knowledge compilation and tractable Boolean circuits \cite{symbolicExplain:IJCAI18,DarwicheHirth20a,shi2020tractable}, some adopt the framework of abductive reasoning  \cite{Ignatiev_abduction_2019,ignatiev2019validating}, and some tackle a specific family of models such as linear models~\cite{marquessilva2020explaining}, decision trees \cite{izza2020explaining}, or tree ensembles \cite{devos2020additive}.

The main benefit of these approaches is that they guarantee provably correct explanations, that is they guarantee a certain prediction for all examples described by the explanation. On the other hand, one downside is that they are generally not as scalable (in the number of features) as black-box methods. Another downside is that they need to completely remove the uncertainty from the classifier to be able to use logical tools and therefore become more rigid. In particular, in order to guarantee a certain outcome with absolute certainty, it is often necessary to include many of the features into the explanation, making the explanation more complex.

Sufficient Reasons \cite{symbolicExplain:IJCAI18,DarwicheHirth20a} 
is one example of these methods that selects as an explanation a minimal subset of features guaranteeing that, no matter what is observed for the remaining features, the decision will stay the same. Sufficient reasons, as well as related logical explanations, ensure minimality and deterministic guarantees in the outcome, while as we see later our sufficient explanations ensure probabilistic guarantees instead.

For a recent and more comprehensive comparison of logic-based vs.\ model-agnostic explanation methods, we refer to \citet{ignatiev2019validating,trust_xai}.

\section{Motivation and Problem Statement}

We will overcome the limitations of both the model-agnostic and logic-based approaches by building local explanation methods that are aware of the distribution over features $\Pr(\X)$. We assume for now that we have access to an accurate model for the feature distribution, and discuss in Section~\ref{sec:prob-reason-tools} how it can be obtained. This distribution will allow us to (i)~reason about the classifier's behavior on realistic input instances and (ii)~provide probabilistic guarantees on the veracity of the explanations. We thus take a principled probabilistic approach in explaining an instance of classification. 

Intuitively, given an instance $\x$ and the classifiers outcome $\C(\x)$, we would like to choose a subset of features $\z \subseteq \x$ as the ``simplest sufficient explanation.'' Firstly, we want it to be sufficient, in that it provides strong probabilistic guarantees about the outcome of the classifier when only features $\z$ are observed. To this end, we want to maximize some sufficiency metric $\suffmetric(\z, \x, f, \Pr)$ describing the behavior of our predictor when only the explanation $\z$ is observed. However, naively maximizing this metric can run into the pitfall of building more complex explanations
in order to squeeze out tiny improvements in the sufficiency metric. To address this, we have our second metric of simplicity $S(\z, \Pr)$, which we use to define a constraint when maximizing the sufficiency metric. 
Putting these together, we call a subset $\z$ of an instance $\x$ a sufficient explanation if it maximizes some sufficiency metric $\suffmetric$ under some simplicity constraint.

With this current definition of sufficient explanations, there is still the possibility that two subsets of a given instance, one a subset of the other, are both simple and have the same degree of sufficiency. This motivates us to further define a minimal sufficient explanation $\z$, i.e. no subset of $\z$ is also a sufficient explanation. This condition is exactly the meaning of ``prime'' in prime implicants as they are used in logical explanations. We will see in Section~\ref{sec:prob-suff-exp} how we can use the feature distribution to ensure this minimality property.

\section{Probabilistic Notions of Sufficiency}
\label{sec:prob-reason-background}

We next introduce two notions of sufficiency using probabilistic reasoning tools and discuss their pros and cons. Then, we discuss their connections. Finally, we use these tools in Section~\ref{sec:prob-suff-exp} to formalize our definition and formulate finding the desired subsets of features as an optimization problem.

\subsection{Probabilistic Reasoning Tools}
\label{sec:prob-reason-tools}

Probabilistic reasoning is a hard task in general, so we need to choose our probabilistic model for $\Pr(\X)$ carefully. We choose \emph{probabilistic circuits} (PCs), which given some structural constraints enable tractable and exact computation of probabilistic reasoning queries such as marginals \cite{choi2020pc}. Moreover, they do so without giving up much expressivity. Another advantage of PCs is that we can learn their structure and parameters from data, allowing us to avoid the exponential worst-case behavior of other probabilistic models.

The two main probabilistic reasoning tools that we use for our explanations are the \textit{Same Decision Probability} (SDP) \cite{Chen2012TheSP} and \textit{Expected Prediction} (EP) \cite{KhosraviIJCAI19}. We introduce them next and explore their trade-offs and connections.

First we have SDP~\cite{Chen2012TheSP} which, intuitively, gives us the probability that our classifier has the same output as $\C(\x)$ given only some subset of observed features $\z$.\footnote{SDP was originally defined for the classifier being a conditional probability test in distribution $\Pr$. Here, we slightly generalize SDP to apply to a distribution $\Pr$ with a separate classifier~$\C$.}

\begin{definition}[Same Decision Probability]
    Given a classifier $\C$, a distribution $\Pr(\X)$ over features, a partition $\Z\M$ of features $\X$, and an assignment $\x$ to $\X$ (and corresponding $\z\subseteq\x$), the same decision probability (SDP) of $\z$ w.r.t.\ $\x$ is
    \begin{equation*}
        \SDP{\x}{\z}=\underset{{\m\sim \Pr(\M\mid\z)}}{\EXP} \id{\C(\z\m)=\C(\x)}.
    \end{equation*}
\end{definition}

The higher the SDP, the better guarantee we have that the partial example~$\z$ will be classified the same way as the full example~$\x$. SDP and related notions have been successfully used in applications such as trimming Bayesian network classifiers \cite{ChoiIJCAI17} and robust feature selection \cite{ChoiIJCAI18}. \citet{renooij2018same} introduced various theoretical properties and bounds on the SDP.

Other explanation methods that provide sufficiency guarantees can be fit into our framework, using SDP as the sufficiency metric. Notably, Anchors \cite{anchors:aaai18} can be thought of as an empirical approximation of SDP sufficiency, as they aim to provide sufficiency guarantees based on sampling local perturbations instead of relying on the data distribution. 
Logical explanations also fit in this framework, as they completely prioritize having deterministic guarantees, i.e. SDP=1, over the need for simplicity.
Another example of a method that uses probabilistic sufficiency can be found in \citet{KhosraviIJCAI19}, where they explain logistic regression models w.r.t.\ Naive Bayes data distributions.

Expected Prediction is another probabilistic reasoning task that has been shown to be successful in handling missing values in classification \cite{khosravi2019tractable,KhosraviIJCAI19,khosravi2020handling}. It provides a promising alternative for SDP in explanations. Intuitively, given some partial observation, expected prediction can be thought of as trying all possible ways of imputing the remaining features and computing an average of all subsequent predictions weighted by the probability of each imputation. In many cases, our classifiers directly output a probability and in those cases we can compute expected prediction as follows:

\begin{definition}[Expected Prediction]
\label{def:exp-pred}
Given a probabilistic predictor $f$, a distribution $\Pr(\X)$ over features, a partition $\Z\M$ of features $\X$, and an assignment $\z$ to $\Z$, the expected prediction of $f$ on $\z$ is
\begin{equation*}
    \F(\z) = \underset{\m\sim \Pr(\M \mid \z)}{\EXP} f(\z\m). 
\end{equation*}
\end{definition}

For some tasks, we may care more about odds rather than probabilities. In those cases, the predictor usually outputs the log-odds $\Lof(\x)$, so we also define expected log-odds:

\begin{definition}[Expected Log-Odds]
Given a log-odds predictor $\Lof$, a distribution $\Pr(\X)$ over features, a partition $\Z\M$ of features $\X$, and an assignment $\z$ to $\Z$, the expected log-odds of $f$ on $\z$ is
\begin{equation*}
   \EL(\z) = \underset{\m\sim \Pr(\M \mid \z)}{\EXP} \Lof(\z\m).
\end{equation*}
\end{definition}

In this paper, unless otherwise noted, expected prediction denoted as $\EP(\z)$ could refer to both cases.
 
\subsection{Connections between SDP and EP}
\label{sec:sdp-ep-relation}

The choice between using SDP and expected prediction as a sufficiency metric is important, as it will decide what kind of guarantees our explanations provide and how efficient they are to compute. Here we explore connections between SDP and expected prediction and provide intuition on some advantages of using expected prediction in defining sufficient explanations. 

One of the main differences between SDP and expected prediction is their computational complexity. While SDP is an appealing criteria to use for selecting explanations, computing the SDP exactly is computationally hard. In particular, it is $\mathit{PP}^\mathit{PP}$-hard on Bayesian networks \cite{choi2012same}. Even on a simple Naive Bayes model for both $\Pr$ and the classifier, computing SDP is NP-hard \cite{chen2013sdp}. 

On the other hand, expected predictions can be tractably computed for many different pairs of discriminative and generative models. For example, it is known to be tractable for the following cases: (i) logistic regression using a conformant naive Bayes distribution \cite{KhosraviIJCAI19}, (ii) decision trees w.r.t.\ probabilistic circuits (PCs) \cite{khosravi2020handling}, (iii) discriminative circuits w.r.t.\ PCs \cite{khosravi2019tractable}, and (iv) when both the feature distribution and predictor are defined by the same PC distribution $\Pr$. In the latter case, the predictor is the conditional probability $\Pr(c \mid \X)$, and the feature distribution is $\Pr(\X)$. Then, expected prediction can be reduced to probabilistic marginal inference in PCs which is tractable for decomposable circuits \cite{choi2020pc}.

Another difference comes from how SDP and expected prediction handle the two distinct uncertainties that arise from the feature distribution and the classifier. Although both are aware of the uncertainty from the feature distribution $\Pr$, SDP ignores the uncertainty in the classifier since it only deals with the decision function $\C(\x)$. For example, SDP cannot distinguish between cases when the classifier has low confidence $f(\x) = T + \epsilon$ (only slightly above decision threshold) and cases when the classifier has high confidence such as $f(\x)=0.99$. In many domains this distinction is vital. For example, doctors care more about the odds of a patient having cancer rather than binary decisions.

For these reasons, we choose to use expected predictions to define and optimize for our explanations. The good news is that optimizing $\EP$ also allows us to maximize a lower bound on SDP. The next theorem provides a theoretical lower bound for SDP using expected predictions. 

\begin{theorem}
\label{thm:sdp-ep-ineq}
Given a predictor $f$ (or $\Lof$), its thresholded classifier $\C$, a positively classified instance $\x$ (i.e. $\C(\x)=1$), a distribution $\Pr$, and some subset of the features $\z \subseteq \x$, we have:
\begin{equation}
    \label{eq:ep-sdp}
    \SDP{\x}{\z}>\frac{\EP(\z)-T}{U(\z)-T}.
\end{equation}
where $\EP(\z)$ refers to expected prediction, and $U(\z)$ is an upper bound for the predictor after fixing $\z$, i.e. $\forall \m\, U(\z)\geq f(\z\m)$. Moreover, if $U(\z)$ is a tight bound, then Equation~\ref{eq:ep-sdp} is also tight.
\end{theorem}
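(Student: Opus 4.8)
The plan is to derive the bound from a Markov-type averaging argument that decomposes the expected prediction according to whether the classifier agrees with the full instance. Since $\C(\x)=1$, the definitions reduce the claim to a statement about the random variable $f(\z\m)$ for $\m\sim\Pr(\M\mid\z)$: writing $p:=\SDP{\x}{\z}$, we have $p=\Pr_{\m}(f(\z\m)\ge T)$ and $\EP(\z)=\EXP_{\m}f(\z\m)$, so it suffices to show $p>\frac{\EP(\z)-T}{U(\z)-T}$, where I take the denominator $U(\z)-T$ to be positive (it is at least $0$, since $U(\z)\ge f(\x)\ge T$ because $U(\z)$ dominates $f(\z\m)$ for the completion $\m$ that yields $\x$).

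First I would split the expectation across the two decision regions,
$$\EP(\z)=\EXP_{\m}\!\big[f(\z\m)\,\id{f(\z\m)\ge T}\big]+\EXP_{\m}\!\big[f(\z\m)\,\id{f(\z\m)<T}\big].$$
For the first term I bound $f(\z\m)\le U(\z)$ pointwise, so it is at most $U(\z)\,p$. For the second term I use that $f(\z\m)<T$ holds on the indicated event (and $f\ge0$ elsewhere), so it is strictly below $T(1-p)$ whenever that event carries positive probability. Summing the two estimates gives $\EP(\z)<U(\z)\,p+T(1-p)=T+p\,(U(\z)-T)$, and dividing by $U(\z)-T$ and rearranging yields inequality~(\ref{eq:ep-sdp}).

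The delicate points are the strictness and the tightness claim rather than the core inequality. The strict inequality is inherited from the second term and holds as long as the negative region $\{f(\z\m)<T\}$ is not null; the lone degenerate case $p=1$ with $f(\z\m)$ almost surely equal to $U(\z)$ would have to be excluded (or absorbed into a non-strict statement). For the ``moreover'' part, I would exhibit a near-equality witness when $U(\z)$ is attained: let $f(\z\m)$ take the value $U(\z)$ with probability $p$ and a value $T-\epsilon$ with probability $1-p$. Then $\EP(\z)=T+p\,(U(\z)-T)-\epsilon(1-p)$, so $\frac{\EP(\z)-T}{U(\z)-T}=p-\frac{\epsilon(1-p)}{U(\z)-T}\to p$ as $\epsilon\to0$, showing the bound cannot be improved when the upper bound is tight. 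I expect the main obstacle to be the bookkeeping of the two pointwise bounds, namely pinning down exactly when each is strict versus tight, since the central estimate itself is a one-line consequence of the region decomposition.
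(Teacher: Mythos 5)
Your proof is correct and follows essentially the same route as the paper's: both decompose $\EP(\z)$ over the two decision regions $\{f(\z\m)\ge T\}$ and $\{f(\z\m)<T\}$, bound the two pieces by $U(\z)$ and $T$ respectively, rearrange to get Equation~\ref{eq:ep-sdp}, and establish tightness via a two-point distribution with one atom at $U(\z)$ and one just below $T$ (you fix $\SDP{\x}{\z}=p$ and let the low atom sit at $T-\epsilon$; the paper fixes $\EP(\z)=F$ and solves for the low atom's location $A<T$ --- the same construction, dually parameterized). Your explicit flagging of the degenerate case where strictness fails ($f(\z\m)=U(\z)$ almost surely, hence $\Pr(f(\z\m)<T)=0$) is in fact more careful than the paper, whose strict inequality silently assumes that the low region has positive probability.
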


\begin{proof}
The proof is included in the appendix.\footnote{Available at \href{http://starai.cs.ucla.edu/papers/WangIJCAI21.pdf}{http://starai.cs.ucla.edu/papers/WangIJCAI21.pdf}}
\end{proof}

Theorem~\ref{thm:sdp-ep-ineq} gives us two ways of computing a lower bound for SDP: using a probabilistic predictor $f$ or a log-odds predictor $\Lof$. Additionally, the theorem can be easily generalized to cases when $\C(\x)=0$. See appendix for a more detailed discussion on which bound is better.

\section{Probabilistic Sufficient Explanations}
\label{sec:prob-suff-exp}

In this section, we use the aforementioned sufficiency metrics and introduce some desirable constraints to ensure simplicity. Then, we put everything together to formalize sufficient explanations and introduce an optimization problem for finding them.
Finally, we introduce a search algorithm for finding sufficient explanations by modifying beam search and leveraging tractability of expected predictions.

To simplify the definitions, we assume without loss of generality that the instance we want to explain is positively classified, i.e. $\C(\x) = 1$. In Section~\ref{sec:sdp-ep-relation}, we saw two candidates that we can use for probabilistic sufficiency guarantees. The first was SDP, which focuses on the final decision of the model. The second was expected prediction, which takes the confidence of the model into account. Because of this, along with its computational tractability, we choose to use expected prediction for our sufficiency guarantees. More specifically, we want to maximize the expected prediction of our explanation to ensure our model is confident in its classification. In addition, maximizing the expected prediction also maximizes a lower bound for SDP, so in practice we will still get good SDP guarantees.

Having chosen a suitable sufficiency metric, we must now choose a simplicity constraint. There are multiple candidates to choose from for this as well. For example, we can impose a cardinality constraint or require explanations to have a high enough likelihood. We choose the cardinality constraint as it is easy to decide on a threshold on the explanation size. We can now formalize the notion of sufficient explanations.

\begin{definition}[Sufficient Explanations] Given a predictor $f$, distribution $\Pr$, and a positively classified instance $\x$ (i.e. $\C(\x)=1$), the set of sufficient explanations for $\x$ is defined as the solution of the following optimization problem:
\begin{gather*}
    \argmax_{\z \, \subseteq \, \x} \EP(\z)  \text{~~~~~s.t. } |\z| \leq k
\end{gather*}
which we denote as $\text{SE}_{k}(\x)$.
\end{definition}

Having defined sufficient explanations, and following the same line of thought as before, we are now interested in choosing sufficient explanations which are minimal. For logical explanations, any minimal sufficient subset can be chosen. However, we can be more selective in our choices using the feature distribution. A natural choice is to choose the most likely sufficient explanations, as these are the most realistic. By maximizing the marginal probability of the explanation, we also ensure the desired minimality. This is because, for subsets $\z_1$ and $\z_2$ of an instance $\x$, if $\z_1\subseteq\z_2$ then $\Pr(\z_1)\geq\Pr(\z_2)$. We thus arrive at the following definition.

\begin{definition}[Most Likely Sufficient Explanations]
Given a predictor $f$, a distribution $\Pr$, and an instance $\x$, the most likely sufficient explanations for $\x$ are given by:
\begin{gather*}
    \text{MLSE}_{k}(\x) = \argmax_{\z \, \in \, \text{SE}_{k}(\x)}\ \Pr(\z)
\end{gather*}
\end{definition}

Next, we devise a search algorithm for finding sufficient explanations. In Section~\ref{sec:experiments}, we show that our algorithm works well in practice, ensuring both sufficiency and simplicity.

\subsection*{Finding Probabilistic Sufficient Explanations}
\label{sec:compute-suff-exp}

To find the most likely sufficient explanations, we use a beam search algorithm to greedily search the space of potential explanations. We do so by keeping track of the top $b$ candidates (the beam) for explanations for each cardinality based on their expected predictions and marginal probabilities.

In more detail, we begin at level zero with the empty candidate, i.e. no features selected. For each subsequent level, we expand the top $b$ candidates of the previous level by considering all feature subsets with one more feature than in the previous level, formed by adding a previously unselected feature to each candidate. We then select the new top $b$ candidates by ranking the expanded states based on $\EP$, breaking ties using $\Pr$, and keeping the top $b$. The search stops after level $k$. At the same time we keep track of the most likely candidate satisfying the current maximum \EP.

Due to the nature of the search, each level of the beam search is highly parallelizable, which can be leveraged to speed up the search. Additionally, by running the algorithm for $k$ levels, we can also keep track of the best sufficiency guarantee at each level between 1 and $k$. Hence, with a little extra book keeping we can keep track of explanations with different sizes, and thus degrees of simplicity, and their corresponding sufficiency guarantees.

Note that the search framework can also be easily adapted for different use cases. For example, we can provide a fixed sufficiency constraint and then find the simplest and most likely explanation with an expected prediction higher than the sufficiency threshold. However, choosing a good threshold for expected prediction is not always straightforward.

\section{Experiments}
\label{sec:experiments}

In this section, we provide several experiments to showcase the effectiveness of our search algorithm in finding sufficient explanations.\footnote{Code at \href{https://github.com/UCLA-StarAI/SufficientExplanations}{github.com/UCLA-StarAI/SufficientExplanations}} 
Additionally, we aim to highlight the advantages of our method in comparison with Anchors and logical explanation methods. More specifically, we would like to answer the following questions:

\begin{itemize}
    \item[--] Can we find explanations with good sufficiency guarantees? How do they compare with Anchors?
    \item[--] Does relaxation from logical to probabilistic guarantees lead to much simpler explanations?
    \item[--] What are the tradeoffs between different sufficiency levels and explanation complexity?
\end{itemize}

We use the adult and MNIST datasets~\cite{kohavi1996scaling,yann2009mnist} for our experiments.
For each dataset, we model the feature distribution by learning a probabilistic circuit (PC) using the open source Juice library~\cite{DangAAAI21}.  We choose decision forests learned by XGBoost \cite{xgboost} as our classifier, as they are popular and because expected prediction is tractable for forests w.r.t.\ PCs \cite{khosravi2020handling}. For more detailed information on the datasets, preprocessing steps, learned models, and computing infrastructure, please refer to the appendix.

\begin{figure}[t]
    \centering
    \begin{subfigure}{\linewidth}
        \centering
        \includegraphics[width=0.7\linewidth]{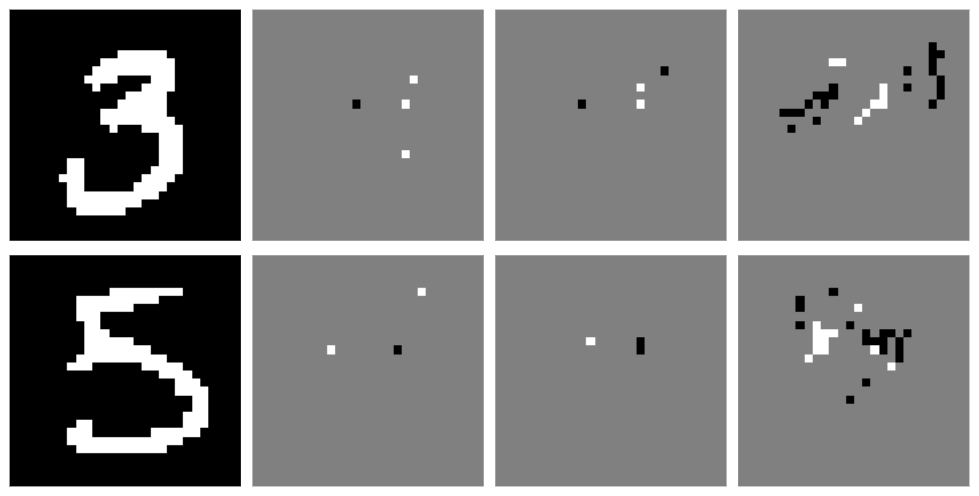}
        \subcaption{Correctly classified examples}
    \end{subfigure}
    \begin{subfigure}{\linewidth}
        \centering
        \includegraphics[width=0.7\linewidth]{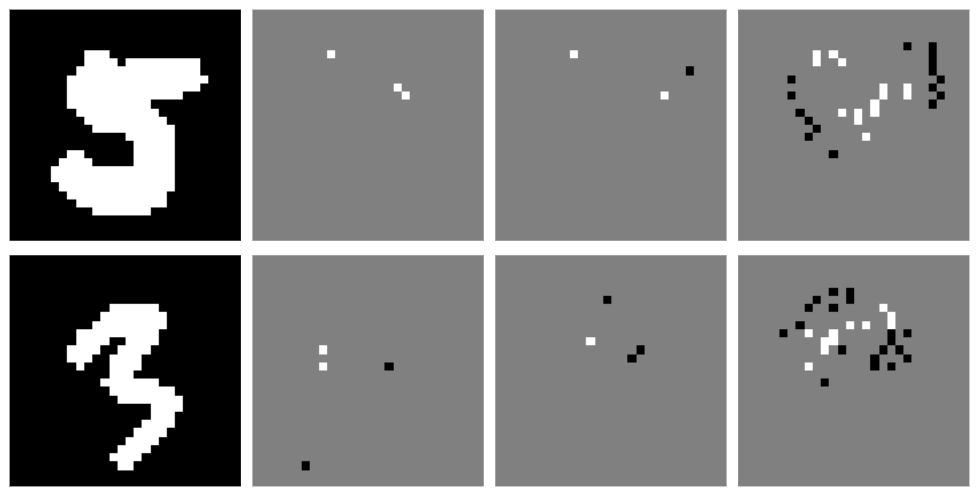}
        \subcaption{Misclassified examples}
    \end{subfigure}
    \caption{Explanations for selected MNIST images. From left to right: 1) original image; 2) Anchors explanation; 3) our explanation with same number of features 4) our explanation with $k=30$. Gray pixels were not chosen for the explanation. Pixels chosen for the explanation are colored the same color as the original image.}
    \label{fig:mnist}
\end{figure}
\begin{table}[t]
    \centering
    \begin{tabular}{c c c c }
        Method & $|\EL(\z)|$ & $\SDP{\x}{\z}$ & $\log P(\z)$ \\
        \hline
        Anchors & $0.75\pm0.37$ & $0.66\pm0.08$ & $-3.29\pm0.88$\\
        MLSE$_s$ & $1.57\pm0.29$ & $0.86\pm0.05$ & $-3.05\pm0.65$\\
        MLSE$_{10}$ & $3.11\pm0.23$ & $0.99\pm0.01$ & $-6.98\pm1.37$\\
        MLSE$_{20}$ & $3.60\pm0.15$ & $1.00\pm0.00$ & $-9.90\pm2.14$\\
        MLSE$_{30}$ & $3.75\pm0.13$ & $1.00\pm0.00$ & $-11.77\pm2.88$
    \end{tabular}
    \caption{Comparison of average expected log-odds, SDP, and marginals between Anchors and MLSE, averaged over 50 random MNIST test images. We take the absolute value of $\EL(\z)$ to measure confidence of the explanations (since it could be negative). MLSE$_s$ sets the cardinality constraint to the same size of the Anchors explanation for each image. The $\pm$ denotes one standard deviation. The SDP values are approximated.
    }
    \label{tab:mnist_sdp_and_ep}
\end{table}

\subsection{Comparison with Anchors}

To demonstrate the scalability of our method and showcase some advantages of our method in comparison with Anchors, we ran our algorithm on the binarized MNIST dataset with a binary classification task of distinguishing between 3s and 5s.

Some images are shown in Figure \ref{fig:mnist} along with a comparison between explanations found using our method and Anchors. For Anchors, we used an SDP (precision) threshold of 0.95, a tolerance ($\delta$) of 0.05, and a beam size of 5. On average it took Anchors 454s to generate explanations. Our algorithm with the same beam size and cardinality constraint $k=30$ took 347s using 16 threads; the sufficient explanations with the same size as Anchors took 45s. See appendix for more details on the run-times.

The last image of each row in Figure~\ref{fig:mnist} is the explanation found using our algorithm with cardinality constraint $k=30$. We see that these explanations were able to pick up on certain features we would naturally use to distinguish between 3s and 5s. In particular the chosen pixels were mostly in the upper portion of each image. This makes sense as both 3s and 5s have a similar arch shape in their lower portions, so the upper portion would be more useful for distinguishing between the two. Additionally, the explanations contain not only some white pixels showing an outline of the predicted number, but also some black pixels, where a number of the opposite label may be present. Finally, by looking only at the explanation in the rightmost column of the last two rows we can guess that the classifier will misclassify those examples.

Table \ref{tab:mnist_sdp_and_ep} provides data for the expected log-odds, SDP, and marginal probabilities for generated explanations using Anchors and our method in a few different scenarios. Since SDP is intractable to compute exactly, we estimate it by computing the SDP on $10000$ samples drawn from the probabilistic circuit conditioned on the explanation. One advantage of using PCs as our generative model is that drawing conditional samples from $\Pr(\cdot \mid \z)$ is very fast. For example, generating 10K samples takes 1 second.
We see that the Anchors are quite overconfident, giving explanations with much lower sample SDPs compared to the desired 0.95. This trend was also observed in \citet{ignatiev2019validating}.

\begin{figure}[t]
    \centering
    \includegraphics[width=0.7\linewidth]{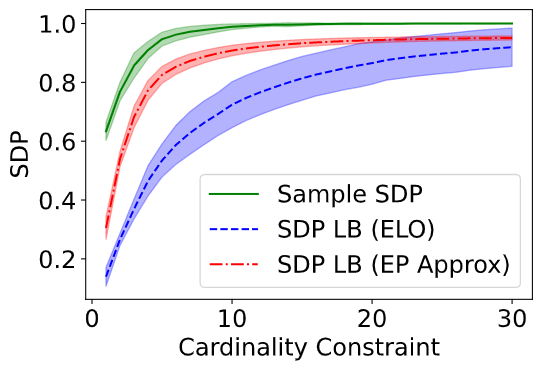}
    \caption{For sufficient explanations (sizes 1-30), we plot SDP estimates (green) vs SDP lower bounds calculated based on expected log-odds (blue), and lower bounds based on approximate expected prediction (red), averaged over 50 test images of MNIST.  Shaded regions represent one standard deviation.
    }
    \label{fig:sdp_chart}
\end{figure}

We plot the sample SDPs for our explanations, along with lower bounds calculated using Theorem~\ref{thm:sdp-ep-ineq}, in Figure~\ref{fig:sdp_chart}. The green line shows that, even when optimizing the EP of our explanations, the SDP still tends to be very high. Moreover, the blue line shows that the simple and efficient SDP lower bound can also provide this guarantee for some of the larger explanations with high EP. The red line is yet another way to estimate the SDP lower bound. See appendix for more details.

\subsection{Comparison with Logical Explanations}

We also compared explanations found using our method to logical explanations, i.e. minimal explanations with $\text{SDP}=1$. We used the adult dataset, which has much fewer features, for this task in order for the logical explanation computation to become feasible. We removed some features to allow for brute force computation of minimum cardinality logical explanations. We chose to use brute force because, as far as we know, there is no method for finding logical explanations for our use case. In total we removed 3 features, leaving us with 11 features. From our brute force search on some test examples, we found that logical explanations needed on average 39\% of the features in order reach an SDP of one. By using our algorithm of maximizing the expected prediction, we found that in most cases selecting only 18\% of the features was already enough to guarantee an SDP of 0.95 on average. More detailed numbers are provided in Table~\ref{tab:logical_chart}. We see that the strict requirement imposed by logical explanation methods can lead to selecting more features leading to more complex explanations. We expect this gap between the complexities of logical and probabilistic explanations to widen for datasets with more features or more complex models.

\begin{table}[t]
    \centering
    \begin{tabular}{c c c c }
        Method & $\SDP{\x}{\z}$ & $\log P(\z)$ & size\\
        \hline
        Logical & $1.0$ & $-7.12\pm2.11$ & $4.30\pm1.13$\\
        Anchors & $0.98\pm0.02$ & $-4.27\pm2.61$ & $2.02\pm1.26$\\
        MLSE$_{s}$ & $0.97\pm0.03$ & $-3.85\pm2.37$ & $2.17\pm1.18$\\
        MLSE$_{1}$ & $0.88\pm0.19$ & $-2.23\pm1.34$ & $1.0\pm0.0$\\
        MLSE$_{2}$ & $0.95\pm0.08$ & $-3.88\pm1.88$ & $2.0\pm0.0$\\
        MLSE$_{3}$ & $0.98\pm0.05$ & $-4.77\pm2.31$ & $2.99\pm0.06$\\
        MLSE$_{4}$ & $0.99\pm0.03$ & $-5.63\pm2.59$ & $3.96\pm0.22$\\
    \end{tabular}
    \caption{SDP, marginal probability, and size statistics for explanations found using different methods for the adult dataset.}
    \label{tab:logical_chart}
\end{table}

\subsection{Tradeoffs between Sufficiency and Simplicity}
\label{sec:tradeoff-suff-complex}

\begin{figure}[t]
    \centering
    \includegraphics[width=\linewidth]{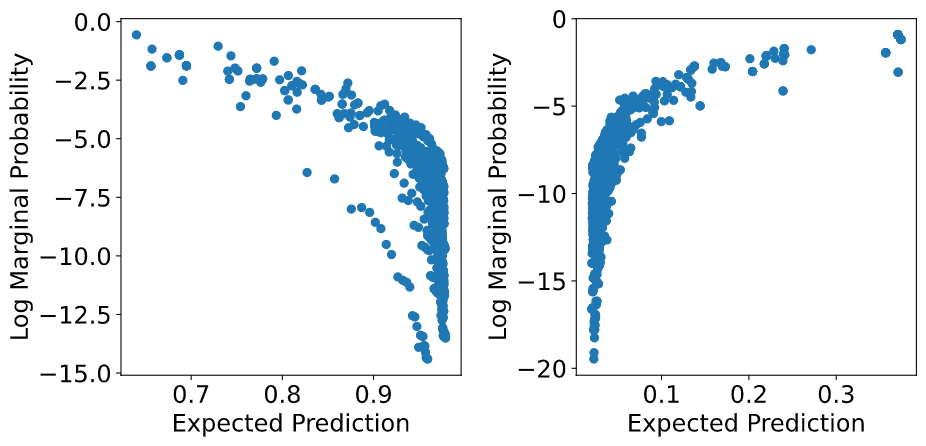}
    \caption{Tradeoff between expected prediction and marginal probability for MLSEs. The first plot is for positive label images (5s); the second is for negative label images (3s). Expected predictions were computed using a first order approximation.}
    \label{fig:tradeoff}
\end{figure}

Finally, we examine the tradeoff between maximizing sufficiency and explanation simplicity. We compare the expected predictions with marginal likelihoods of explanations. Figure \ref{fig:tradeoff} shows a scatter plot comparing these two quantities for different explanations generated for 50 MNIST images. As we see, the general trend is that by enforcing less strict sufficiency requirements we can get more likely (and also smaller) explanations. In particular, the trend is very steep around EPs of 0 and 1, meaning that making the guarantees even slightly probabilistic will lead to significant simplification of the explanations, thus validating our probabilistic approach.

\section{Conclusion}
\label{sec:conclusion}

We introduced a new framework for reasoning about the local behavior of classifiers. We formulated the problem as finding simplest and most likely explanations that maximize a probabilistic guarantee, and discussed advantages of our framework compared to model agnostic and logical explanation methods. We provided experiments to validate our claims. We conclude that probabilistic sufficient explanations are a valuable addition to the arsenal of local explanation~methods.

\section*{Acknowledgments}
The authors would like to thank YooJung Choi for helpful discussions regarding SDP.
This work is partially supported by NSF grants \#IIS-1943641, \#IIS-1633857, \#CCF-1837129, DARPA grant \#N66001-17-2-4032, a Sloan Fellowship, Intel, and Facebook.

\bibliographystyle{named}
\bibliography{ijcai21}

\clearpage
\appendix
\section{Appendix}
\subsection{Proof of Theorem~\ref{thm:sdp-ep-ineq}}

\begin{proof}
    We provide the proof for the case of using $\EP = \F$, and $U(\z)$ providing an upper bound on the probabilistic predictor $f(\z\M)$. The proof is mostly identical in the case of having $\EP = \EL $ and $U(\z)$ being an upper bound for the log-odds predictor $\Lof(\z\M)$ .
    
    Let $T$ be the decision threshold. Then we have $\SDP{\x}{\z}=\Pr(f(\z\m)\geq T)$ and $\F(\z)=\EXP[f(\z\m)]$ where $\m\sim\Pr(\M|\z)$. Thus,
    \begin{align*}
    \F(\z)&=\EXP[f(\z\m)]\\
    &=\EXP[f(\z\m)|f(\z\m)<T] \cdot \Pr(f(\z\m)<T)\\
    & \quad+ \EXP[f(\z\m)|f(\z\m)\geq T] \cdot \Pr(f(\z\m)\geq T)\\
    &< T(1-\Pr(f(\z\m)\geq T))\\
    & \quad+ U(\z)\Pr(f(\z\m)\geq T)\\
    &= T + (U(\z)-T) \Pr(f(\z\m)\geq T)\\
    &= T + (U(\z)-T)\ \SDP{\x}{\z}.
    \end{align*}
    Rearranging the terms leads to Equation~\ref{eq:ep-sdp}. 
    
    Next we show how to construct a distribution for $f(\z\M)$ to demonstrate the tightness of the bound. Assume we are given $\F(\z)=F$, $U(\z)=U$, and a decision threshold $T$ with $F<T<U$. Then let $\epsilon>0$ and consider the distribution given by 
    \begin{align*}
        P(f(\z\M)=k)=
        \begin{dcases}
        \frac{F-T}{U-T}+\epsilon&\text{if }k=U\\
        \frac{U-F}{U-T}-\epsilon&\text{if }k=A\\
        0&\text{otherwise}
        \end{dcases}
    \end{align*}
    where $A$ is some constant which makes $\F(\z)=F$ and $A<T$ so that $\SDP{\x}{\z}=\frac{F-T}{U-T}+\epsilon$. We now show how to solve for $A$. 
    
    We start by computing expected prediction (Definition~\ref{def:exp-pred}), we have:
    \begin{align*}
        \F(\z) &= \underset{\m\sim \Pr(\M \mid \z)}{\EXP} f(\z\m) \\
              &= U \times P(f(\z\M) = U) + A \times P(f(\z\M) = A) + 0 \\
              &= U\left(\frac{F-T}{U-T}+\epsilon\right)+A\left(\frac{U-F}{U-T}-\epsilon\right) \\
              &= F
    \end{align*}
    and then solve for $A$:
    \begin{align*}
        A &= \frac{F-U\left(\frac{F-T}{U-T}+\epsilon\right)}{\left(\frac{U-F}{U-T}-\epsilon\right)}\\
        &= \frac{F(U-T)-U(F-T)-U(U-T)\epsilon}{U-F-(U-T)\epsilon}\\
        &= \frac{T(U-F)-U(U-T)\epsilon}{U-F-(U-T)\epsilon}\\
        &< \frac{T(U-F)-T(U-T)\epsilon}{U-F-(U-T)\epsilon}\\
        &= T
    \end{align*}
    Thus we have $\F(\z)=F$ and, since $A<T$, $\SDP{\x}{\z}=\frac{F-T}{U-T}+\epsilon$.
\end{proof}

\subsection{Beam Search Algorithm}

\subsubsection{Pseudocode}

\begin{algorithm}[ht]
\caption{Finding Most likely Sufficient Explanations}
{\bfseries Input:} instance $\x$, feature distribution $\Pr(\z)$, expected prediction function $\EP(\z)$, max features $k$, beam size $b$

\begin{algorithmic}[1]
\label{alg:find-exp}
\STATE $\text{MLSE} \gets \emptyset$
\STATE $\text{beam} \gets \{\emptyset\}$
\FOR {$i=1...k$}
    \STATE $\text{candidates}\gets\bigcup_{c\in\text{beam}}\text{expand}(c)$
    \STATE $\text{compute }\EP(c)\text{ and }\Pr(c)\text{ for }c\in\text{candidates}$
    \STATE $\text{beam}\gets\text{top\_}b(\text{candidates})$
    \STATE $\text{update MLSE}$
\ENDFOR
\end{algorithmic}
\end{algorithm}

In the above pseudocode, the expand function generates a new set of candidates, each formed by adding one yet unobserved feature to the input candidate. The top\_$b$ function selects the $b$ candidates with the highest $\EP$, with ties broken using $\Pr$. Finally, the MLSE is updated if an explanation with a higher $\EP$ is found, or if one has the same $\EP$ as the previous MLSE but a higher $\Pr$.

\subsubsection{Computational Complexity}

When using PCs to model the feature distribution,
the runtimes for tractable expected prediction (\EP) and marginal ($\Pr$) algorithms do not depend on how many features are observed. $\Pr$ can be computed in linear time w.r.t. circuit size. Linearity of decision forest classification also allows for expected prediction computations in time linear w.r.t. circuit size. Thus we treat \EP and $\Pr$ as oracles and measure our algorithm's runtime by the number of calls to them. In each level of the beam search, we expand the previous level's top $b$ candidates, giving us at most $nb$ unique states, where $n$ is the number of features. We then call $\EP$ and $\Pr$ once for each state. Since the search stops after level $k$, overall we make in total $O(nkb)$ calls to the $\EP$ and $\Pr$ oracles.

\subsection{More Experiment Details}

\subsubsection{Computing Infrastructure} 

All experiments were run on a Linux server with 40 CPU cores and 500 \text{GB} of RAM, albeit not all the memory or CPU cores where needed for generating one explanation. Our method is highly parallizable as we see adding more threads speeds up the process substantially. No GPUs were used in this paper.

Although we did not use any GPUs, it should be easy to re-implement some parts of our method to support GPUs. Some of the main ingredients of our method such as computing marginal probabilities using Probabilistic Circuits already support GPUs and enjoy huge speed ups. Furthermore, computing expected predictions and the search at each level should be amenable to GPU parallization. So, overall we should be able to speed up the algorithm by 1-2 orders of magnitude. This allows us to scale to even more complicated models with more features in the future.

\subsubsection{Datasets and Preprocessing Steps} 

The MNIST \cite{yann2009mnist} dataset consists of 60,000 grayscale images ($28\times28$ pixels) of handwritten digits (0-9). We limit the dataset into digits of 3 and 5 to ensure we have a binary classification task. The dataset is already split between train and test images, so we used the same split, selecting from each digits of 3s and 5s. Since our probabilistic circuit library currently only supports binary feature values, we also binarized pixel values. indepedently for each image, by applying a threshold at 0.05 standard deviations above the mean. 

The classification task for the adult dataset \cite{kohavi1996scaling} is to determine whether a given individual makes over $\$50,000$ per year. Features include age, sex, working class, hours worked per week, education level,
nationality, etc. We perform discretization of continuous features by applying standard binning. We then perform one-hot-encoding for all features. Again, this is because the probabilistic circuit library only supports binary features. We then used an 80-20 split for our training and test data. In general, our method should be straightforward to extend to multi-category features and also continuous features. 

Source code for processing the datasets will be included in the final version of paper as supplementary.

\subsubsection{Details on models}

\begin{figure}
    \centering
    \includegraphics[width=\linewidth]{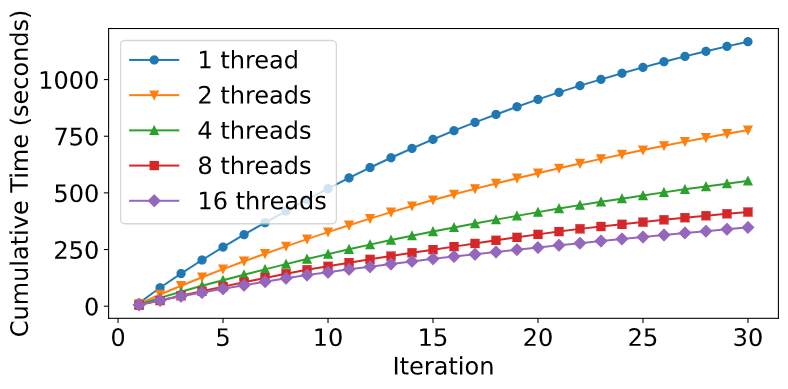}
    \caption{MNIST Runtimes. Average cumulative time taken until completion of each iteration of the beam search algorithm for different numbers of threads.}
    \label{fig:mnist_threads}
\end{figure}

For our MNIST experiments, we used a PC with 10124 nodes, 15640 edges, and 5916 parameters. It took approximately 35 minutes to train, stopping after 410 iterations. For our classifier, we trained a decision forest with 6 trees, each with a maximum depth of 6. The final model has a total of 297 leaves. The classification accuracy on our test images is $98.63\%$.

For our adult experiments, we used a PC with 22899 nodes, 44152 edges, and 14828 parameters. It took approximately 17 minutes to train, stopping after 190 iterations of structure learning. For our classifier, we trained a decision forest with the same constraints as above. The final model has a total of 255 leaves. The classification accuracy on our test dataset is $84.20\%$.

\subsubsection{Metric Calculation Details}

The classifiers used in our experiments were decision forests, which allow for efficient computation of expected log-odds. Classification using these models is done by summing the weights of one leaf from each tree, where the chosen leaf from each tree has a path from the root not contradicting with the instance to be classified. This additive model allows us to utilize linearity of expectation to compute expected log-odds \cite{khosravi2020handling}.

This model does not, however, allow for efficient computation of expected predictions, where prediction refers to taking the sigmoid of the classifier output. This is due to the non-linear nature of the sigmoid function. Instead, when we give expected prediction values, we use a first order approximation by taking the sigmoid of the expected log-odds.

In Figure~\ref{fig:sdp_chart} we show a plot with SDP lower bounds computed using Theorem~\ref{thm:sdp-ep-ineq}. Graphing of the blue curve requires computation of an upper bound $U(\z)$ on the predictions of a decision forest. The upper bound used is a loose one which we computed by adding the weights of one leaf from each tree in the forest, where each leaf is the maximum weighted leaf in its tree whose path from the root does not contradict with the explanation. This bound is a loose once since paths for leaves from different trees may contradict. For the red curve, since our model outputs log-odds, we used the first order approximation mentioned above.

\subsubsection{Effects of Parallelism on Performance}

One advantage of our method is that we can parallelize computation of expected predictions at each level of the beam search. The effects of can be seen in Figure~\ref{fig:mnist_threads}, which plots the cumulative time taken to complete each level of the beam search for different numbers of threads. We see that more threads do indeed allow for much faster runtimes. We leave investigating benefits of GPU acceleration for future work.

\subsubsection{More on SDP Lowerbounds}
In the main text we saw that Theorem~\ref{thm:sdp-ep-ineq} gives us two ways of computing a lower bound for SDP: using a probabilistic predictor $f$ or a log-odds predictor $\Lof$. One of them might give a tigher bound, depending on how our predictor is defined. Generally, the tighter bound we can find for $U(\z)$ the tighter bound we get for SDP.

For example, in the case of having a probabilistic predictor, the decision threshold is usually $T=0.5$, and we can easily get a trivial upper bound of $U(\z)=1$. Depending on the model family we might be able to get a tighter bound for $U(\z)$, and hence a tighter lower bound for SDP.
For the case of a log-odds predictor, we usually take $T = 0$, but getting an upper-bound $U(\z)$ might not be trivial. However, in many cases an upper bound $U(\z)$ can be computed efficiently for log-odds predictor, but might not be tight.

\subsubsection{Limitations}

Some knowledge of probabilities is needed to fully interpret the results. However, as seen with the MNIST example, visual representations can be designed to give users a feel for what the model is doing.

\subsubsection{More MNIST examples}

Figure~\ref{fig:more_mnist} provides some extra examples on sufficient explanations for MNIST. We see examples for both correctly and incorrectly classified images, and see the effects of cardinality constraints on the explanations.

\begin{figure}
    \centering
    \begin{subfigure}{\linewidth}
        \centering
        \includegraphics[width=0.7\linewidth]{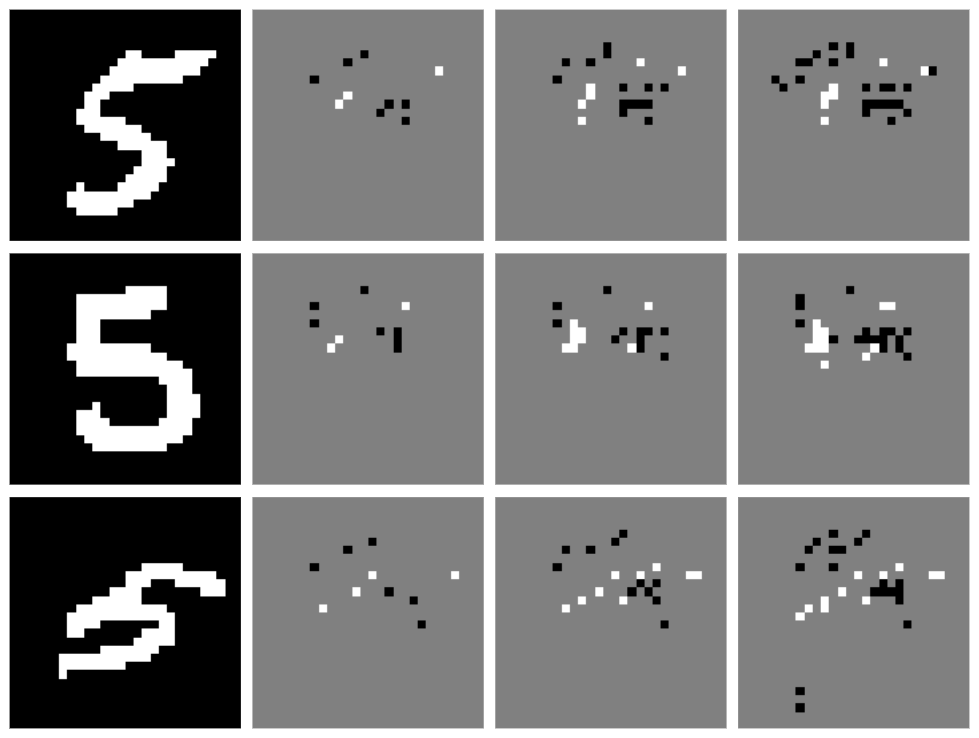}
        \subcaption{Correctly classified 5s}
    \end{subfigure}
    \begin{subfigure}{\linewidth}
        \centering
        \includegraphics[width=0.7\linewidth]{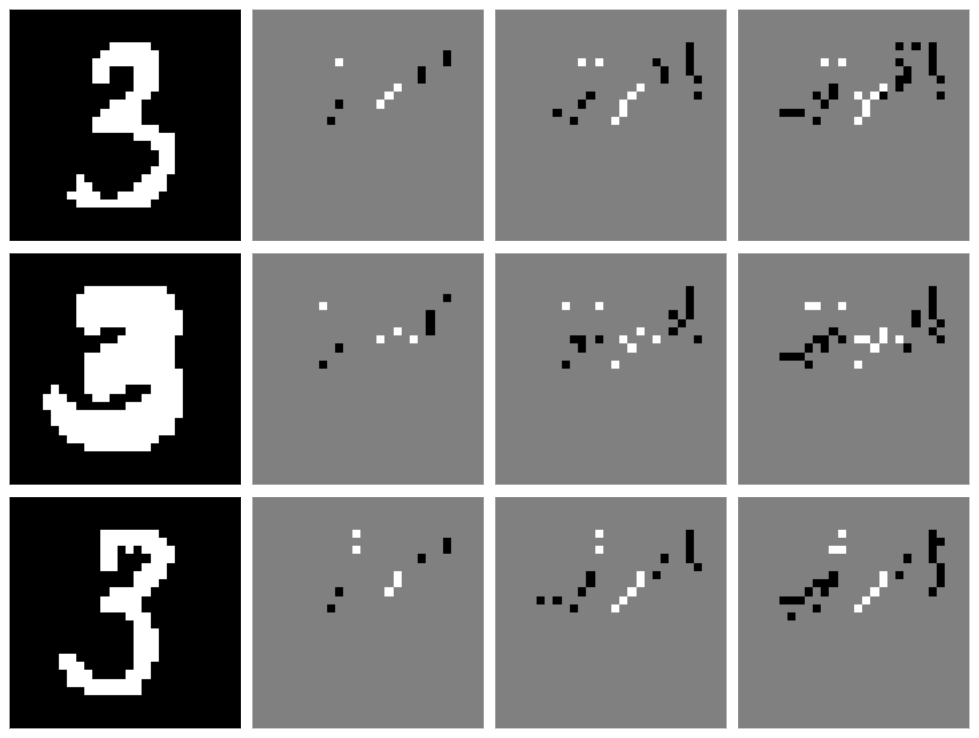}
        \subcaption{Correctly classified 3s}
    \end{subfigure}
    \begin{subfigure}{\linewidth}
        \centering
        \includegraphics[width=0.7\linewidth]{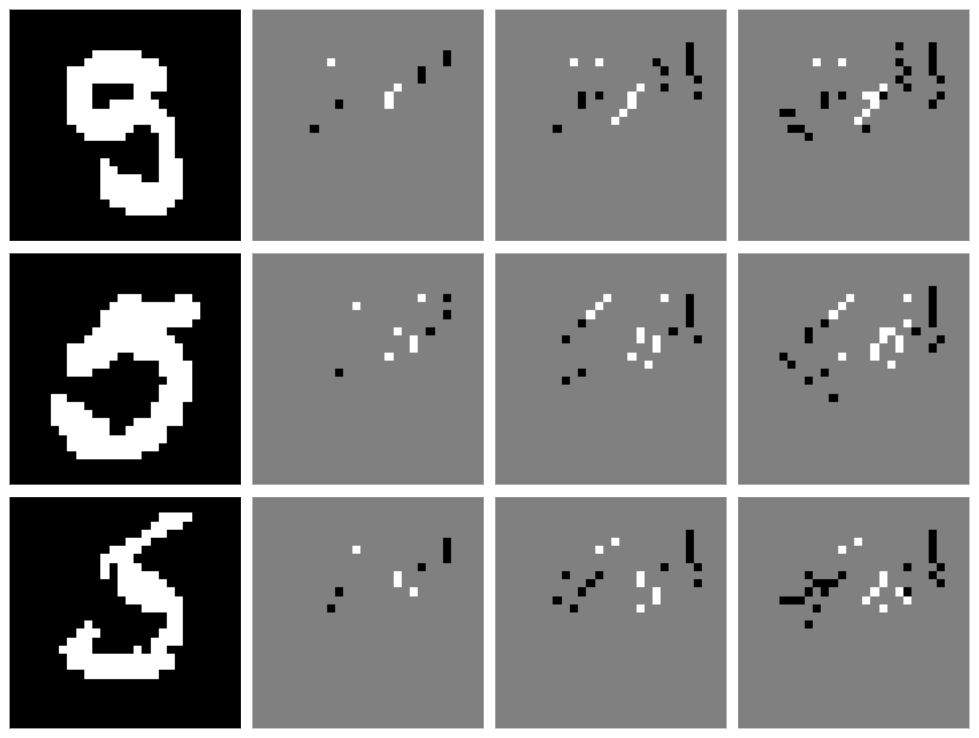}
        \subcaption{Misclassified 5s}
    \end{subfigure}
    \begin{subfigure}{\linewidth}
        \centering
        \includegraphics[width=0.7\linewidth]{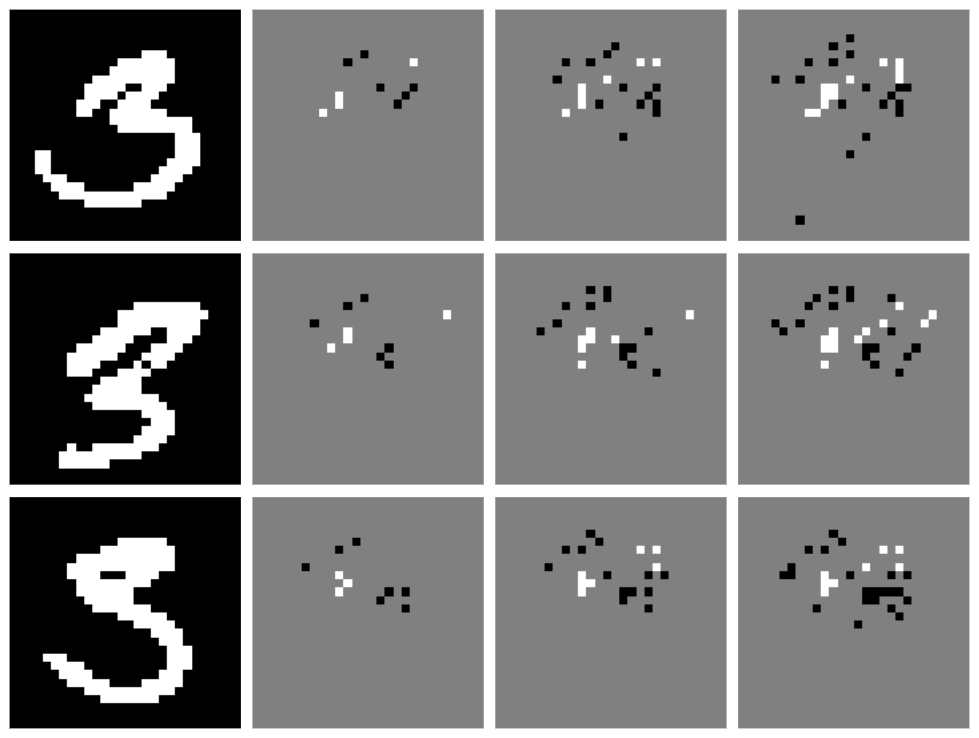}
        \subcaption{Misclassified 3s}
    \end{subfigure}
    \caption{More MNIST examples along with explanations with cardinality constraint $k=10,20,30$.}
    \label{fig:more_mnist}
\end{figure}

\end{document}